\def\BState{\State\hskip-\ALG@thistlm}
\theoremstyle{plain}
\newtheorem{notation}{Notation}
\newcommand{\ZPA}{\( \mathbf{0}' \) algorithm}
\newcommand{\ZPAs}{\( \mathbf{0}' \) algorithms}
\let\oldsqrt\sqrt
\def\sqrt{\mathpalette\DHLhksqrt}
\def\DHLhksqrt#1#2{%
\setbox0=\hbox{$#1\oldsqrt{#2\,}$}\dimen0=\ht0
\advance\dimen0-0.2\ht0
\setbox2=\hbox{\vrule height\ht0 depth -\dimen0}%
{\box0\lower0.4pt\box2}}
\def\zp{\mathbf{0}'}
\begin{document}

\title{Analysis of Algorithms and Partial Algorithms}
\author{Andrew MacFie}
\institute{\ }
%\newdateformat{amdate}{\THEYEAR.\shortmonthname[\THEMONTH].\twodigit{\THEDAY}}
%\date{\vspace{-5ex}}
%\date{\amdate\today}
\maketitle

\begin{abstract}
  We present an alternative methodology for the analysis of algorithms,
  based on the concept of expected discounted reward.
  This methodology naturally handles algorithms that do not always terminate,
  so it can (theoretically) be used with partial
  algorithms for undecidable problems, such as those found in artificial
  general intelligence (AGI) and automated theorem proving.
  We mention an approach to self-improving AGI enabled by this methodology.
\end{abstract}

%\tableofcontents  % JFT

\section{Introduction: Shortcomings of Traditional Analysis of Algorithms}

Currently, the (running time) analysis of algorithms takes the following form.
Given two algorithms \( A \), \( B \) that solve the same problem,
we find which is more efficient by asymptotically comparing the running time
sequences \( (a_n) \), \( (b_n) \) \cite{clr,iaofa}.
This could be using worst-case or average-case running times or even
smoothed analysis \cite{spielman2009smoothed}.
We refer to this general method as \emph{traditional analysis of algorithms}.

As with any model, traditional analysis of algorithms is not perfect.
Authors have noted
\cite{aaronson2012philosophers,gurevich1993feasible}
that comparing sequence tails avoids the arbitrariness of any particular range
of input lengths but leads us to say \( a_n = n^{100} \) is superior to
\( b_n = \left( 1 + \exp(-10^{10}) \right )^n \) which is false for
practical purposes.

A further issue with traditional analysis is illustrated by
this situation:
Say we have a function \(F: \{0, 1\}^* \rightarrow \{0, 1\} \)
and an algorithm \( A \) that computes \(F\) such that for
\( n \geq 0 \), \(A\) takes \( (n!)! \) steps on the input \( 0^n \)
and \( n \) steps on any other input of length \(n\).
The algorithm \(A\) then has worst-case running time \( (n!)! \) and
average-case running time slightly greater than \( 2^{-n} (n!)! \),
which are both terrible.
However, if the inputs are generated according to a uniform distribution, the
probability of taking more than \( n \) steps is  \( 2^{-n} \) which is quickly
negligible.
We see that \(A\) should be considered an excellent algorithm but traditional
analysis does not tell us that, unless we add ``with high probability''.

The same issue arises if \( A \) simply does not halt on \(0^n\),
in which case the worst-case and average-case running times are infinite.
Indeed, this is not an esoteric phenomenon.
For any problem with Turing degree \( \zp \) we cannot have an algorithm
that halts on every input, but we develop
partial solutions that work on a subset of inputs.
Such problems include string compression (Kolmogorov complexity),
the halting problem in program analysis \cite{looper}, algebraic
simplification \cite{trott}, program optimization, automated theorem proving,
and Solomonoff induction (central to artificial general intelligence
\cite{lv}).
E.g.\ in the case of automated theorem proving, Buss, describing the main
open problems in proof theory \cite{2000}, states,
``Computerized proof search ... is widely used, but almost no mathematical
theory is known about the effectiveness or optimality of present-day
algorithms.''

\begin{definition}
  An algorithm $A$ is a \emph{partial algorithm} (a.k.a.\
  computational method \cite[p5]{knut}) for a given problem if on all
  inputs, $A$ either outputs the correct value, or does not terminate.
\end{definition}

\begin{definition}
  We refer to partial algorithms for problems with Turing degree \( \zp \)
  as \emph{\( \zp \) algorithms}.
\end{definition}

To analyze \ZPAs, and perhaps to better analyze normal terminating algorithms,
we need a new approach that is not based on worst-case or average-case running
time sequences.
In Sect.~\ref{sec:er} we present a new method for analyzing algorithms,
called expected-reward analysis that avoids some of the issues mentioned above.
Then in Sect.~\ref{sec:selfimprovement} we mention how this method can be
used in self-improving AI systems.
We give directions for further work in Sect.~\ref{sec:furtherwork}.

\begin{notation}
  Given a (possibly partial) algorithm \(A\) and an input \(\omega\),
  we denote the number of steps taken by \(A\) on \(\omega\) by
  \( c_A(\omega),\)
  which takes the value \( \infty \) if \(A\) does not halt on \(\omega\).
\end{notation}

\section{Expected-Reward Analysis of Algorithms} \label{sec:er}

\subsection{Definition}

Let $A$ be a (possibly partial) algorithm with inputs in \( \Omega \).
We say the \emph{score} of $A$ is
\begin{equation*}
  S(A)
    = \sum_{\omega \in \Omega} P(\{\omega\}) r(\omega) D(c_A(\omega))
    = E(r \cdot (D \circ c_A)) \enspace,
\end{equation*}
where
\( P \) is a probability measure on \( \Omega \),
$D$ is a discount function \cite{frederick2002time},
and
$r(\omega)$ is a reward (a.k.a.\ utility) value associated with obtaining the
solution to $\omega$.
The expression $S(A)$ may be interpreted as the expected discounted reward that
$A$ receives if run on a random input,
and the practice of comparing scores among algorithms we call
\emph{expected-reward analysis}.
A higher score indicates a more efficient algorithm.

% D, r
The functions \( D \) and \( r \) are arbitrary and are free to be set in the
context of a particular application.
E.g.\ in graphical user interface software we often desire near-instant
responses, with utility rapidly dropping off with time.
Assuming \(0 \leq r \leq 1\), we immediately see that for all \(A\), partial
or not, we have
\[
  0 \leq S(A) \leq 1 \enspace .
\]
For simplicity in this paper we assume \(r(\omega) = 1\) and \(D\) is an
exponential discount function, i.e.\
$$
    D(c_A(\omega)) = \exp(-\lambda \, c_A(\omega)) \enspace ,
$$
where $\lambda > 0$ is a discount rate.

% P
The choice of \(P\) is also arbitrary;
we remark on two special cases.
If all inputs of a given length are weighted equally, \(P\) is determined
by a probability mass function on \( \mathbb{Z}_{0+} \).
In this case any common discrete probability distribution may be used as
appropriate.
The measure \(P\) is also determined by a probability mass function on
\(\mathbb{Z}_{0+}\) if we weight equal-length inputs according to Solomonoff's
universal distribution \(m\) \cite{lv}, which is a particularly good general
model, although computationally difficult.

% relation to trad (different)
Expected-reward analysis is non-asymptotic, in the sense that all inputs
potentially matter.
Thus, while expected-reward analysis can be used on terminating algorithms, we
expect it to give different results from traditional analysis, in general.
% caching issues
Since particular inputs can make a difference to \(S(A)\), it may be
advantageous to ``hardcode'' initial cases into an algorithm.
This practice certainly exists, e.g. humans may store the \(12 \times 12\)
multiplication table as well as knowing a general integer multiplication
algorithm.

% model invariance
Computational complexity theory often works with classes of problems whose
definitions are equivalent for all ``reasonable'' models of computation
\cite{boas}.
However, even a varying constant factor could arbitrarily change a score.
This is simply the price of concreteness,
and outside of complexity theory, traditional analysis of algorithms generally
selects a particular model of computation and gives precise results that do not
necessarily apply to other models \cite{ac}.

% estimate w/ stats
Unlike traditional analysis, experimental data is relevant to score values
in a statistical sense.
If we are able to generate inputs according to \(P\), either artificially or by
sampling inputs found in practice, \(S(A)\) is a quantity amenable to
statistical estimation.
This suggests a form of experimental analysis of algorithms which focuses on a
single real number rather than plotting the estimated running time for
every input length, which, in the necessary absence of asymptotics in
experimental analysis, may not conclusively rank two competing algorithms
anyway.

% relation to agent scores (similar)
The expected-reward paradigm already appears in the analysis of artificial
agents, rather than algorithms \cite{defi}.
As we see in Sect.~\ref{sec:selfimprovement}, however, even in applications
to AI, working in the more classical domain of algorithms brings benefits.

\subsection{Theory and Practice} \label{sec:tp}

Traditional analysis of algorithms has an established literature going
back decades which provides a set of techniques for performing traditional
analysis on algorithms developed for various problems.
We do not significantly develop a mathematical theory of expected-reward
analysis here, but we make some very brief initial remarks.

By way of introductory example, we consider expected-reward analysis applied to
some well-known sorting algorithms.
Let \(S_n\) be the set of permutations of
\([1..n]\) and let \(\Pi_n\) be a uniform random element of \(S_n\).
We denote the algorithms mergesort and quicksort by
\(M\) and \(Q\), as defined in \cite{iaofa}, and set
\[
  m_n = E\left[\exp(-\lambda \, c_M(\Pi_n)) \right],\ \
  q_n = E\left[\exp(-\lambda \, c_Q(\Pi_n)) \right] \enspace ,
\]
where \(c_A(\omega)\) is the number of comparison operations used by
an algorithm \(A\) to sort an input \(\omega\).

\begin{proposition} \label{pro:miq}
  For \(n \geq 1\) we have
  \begin{equation} \label{eq:mn}
    m_n = \exp \left(-\lambda (n \lceil \lg(n) \rceil + n -
      2^{\lceil \lg(n) \rceil}) \right), \qquad m_0 = 1,
  \end{equation}
  \[
    q_n = \frac{e^{-\lambda(n+1)}}{n} \sum_{k=1}^n q_{k-1} q_{n-k},
      \qquad q_0 = 1 \enspace .
  \]
\end{proposition}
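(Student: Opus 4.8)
The plan is to treat the mergesort and quicksort cases separately, since the former reduces to a deterministic comparison count while the latter requires a recursive probabilistic argument. For mergesort, the key observation is that the number of comparisons $c_M(\pi)$ that the algorithm of \cite{iaofa} uses is the \emph{same} for every permutation $\pi \in S_n$; mergesort's comparison count depends only on the shape of the recursion tree, not on the data. Hence $m_n = \exp(-\lambda\, c_M(n))$ where $c_M(n)$ is this common value, and it remains to show $c_M(n) = n\lceil\lg n\rceil + n - 2^{\lceil\lg n\rceil}$. First I would set up the standard recurrence $c_M(n) = c_M(\lceil n/2\rceil) + c_M(\lfloor n/2\rfloor) + (n-1)$ with $c_M(1) = 0$ (one merge of two sorted lists of total length $n$ costs at most $n-1$ comparisons, and with the convention of \cite{iaofa} exactly $n-1$), and then verify the closed form by induction on $n$, splitting into the cases where $n$ is even, odd, or a power of two to handle the ceiling/floor and the $2^{\lceil\lg n\rceil}$ term. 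The base case $m_0 = 1$ is immediate since sorting the empty permutation costs zero comparisons, so $D(0) = e^0 = 1$.

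For quicksort, I would condition on the rank $k$ of the pivot chosen at the top level (so $k$ is the position of the first element, or the chosen pivot, in sorted order), which is uniform on $\{1,\dots,n\}$ under $\Pi_n$. Given the pivot has rank $k$, the partition step uses $n+1$ comparisons (following the comparison-count convention of \cite{iaofa}; this is the constant that produces the $e^{-\lambda(n+1)}$ factor), and — this is the crucial independence fact — conditioned on the pivot rank, the two resulting subarrays are independent uniformly random permutations of $[1..k-1]$ and $[1..n-k]$ respectively. Therefore
\[
  c_Q(\Pi_n) \;\overset{d}{=}\; (n+1) + c_Q(\Pi'_{k-1}) + c_Q(\Pi''_{n-k})
\]
with the two terms on the right independent given $k$. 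Taking $E[\exp(-\lambda\,\cdot\,)]$, the exponential turns the sum into a product, independence lets the expectation of the product factor, and averaging over the uniform $k$ gives
\[
  q_n = \frac{e^{-\lambda(n+1)}}{n}\sum_{k=1}^{n} q_{k-1}\,q_{n-k},
\]
with $q_0 = 1$ again from $D(0) = 1$ on the empty input.

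The main obstacle is pinning down the exact comparison counts — the $n-1$ for a merge and the $n+1$ for a partition — which depend on the precise formulation of $M$ and $Q$ in \cite{iaofa}; a different but equally standard convention (e.g.\ $n$ comparisons for partition, or $n-1$) would change only the constant inside the discount factor, not the structure of either formula, so I would state explicitly which convention is being used and note that the result is robust up to that choice. A secondary technical point is justifying the conditional-independence claim for quicksort rigorously: one argues that the relative order of the elements landing in the left part is unaffected by, and independent of, the relative order of those landing in the right part, once the pivot value is fixed, which is the classical fact underlying every analysis of randomized quicksort. Neither of these should present real difficulty; the mergesort induction is the only genuine calculation, and it is routine.
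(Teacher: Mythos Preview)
Your approach is essentially the paper's: for mergesort you use that \(c_M\) is deterministic in \(n\), and for quicksort you condition on the pivot rank, invoke conditional independence of the two subarrays, and let the exponential factor the sum into a product before averaging over \(k\). The paper does exactly this, except that for mergesort it simply cites \cite{iaofa} for the closed form rather than re-deriving it from a recurrence.

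One concrete correction: your recurrence \(c_M(n)=c_M(\lceil n/2\rceil)+c_M(\lfloor n/2\rfloor)+(n-1)\) does \emph{not} produce the formula in the proposition. With merge cost \(n-1\) one gets \(c_M(2)=1\), whereas the stated closed form gives \(c_M(2)=2\cdot 1+2-2=2\). The recurrence that matches \(n\lceil\lg n\rceil+n-2^{\lceil\lg n\rceil}\) (and also the alternative expression \(\sum_{k=1}^{n-1}(\lfloor\lg k\rfloor+2)\) used later in the paper) has per-merge cost \(n\), not \(n-1\); this is the convention in \cite{iaofa} for the variant of mergesort under consideration. You already flag this as the main thing to pin down, so just be aware that the specific constant you wrote is off by one.
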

\begin{proof}

From \cite{iaofa},
\(M\) makes the same number of comparisons for all inputs of length $n \geq 1$:
\[
  c_M(\Pi_n) = n \lceil \lg(n) \rceil + n -
  2^{\lceil \lg(n) \rceil} \enspace ,
\]
so (\ref{eq:mn}) is immediate.

Now, when \(Q\) is called on \(\Pi_n\), let $\rho(\Pi_n)$ be the pivot
element, and let \(\underline{\Pi}_n, \overline{\Pi}_n\) be the subarrays
constructed for recursive calls to \(Q\),
where the elements in \(\underline{\Pi}_n\) are less than \(\rho(\Pi_n)\),
and the elements in \(\overline{\Pi}_n\) are greater.

We have
\begin{align*}
  E[ &\exp(-\lambda c_Q(\Pi_n)) ] \\
  &= \frac{1}{n} \sum_{k=1}^n
    E[\exp(\,-\lambda(n + 1 + c_Q(\underline{\Pi}_n) +
    c_Q(\overline{\Pi}_n))\,) \,|\, \rho(\Pi_n)=k] \\
  &= \frac{e^{-\lambda(n+1)}}{n} \sum_{k=1}^n
    E[\exp(\,-\lambda(c_Q(\underline{\Pi}_n) + c_Q(\overline{\Pi}_n))\,)
    \,|\, \rho(\Pi_n)=k] \enspace .
  \end{align*}
It can be seen that given \(\rho(\Pi_n)=k\), \(\underline{\Pi}_n\) and
\(\overline{\Pi}_n\) are independent, thus
\begin{align*}
  E[ &\exp(-\lambda c_Q(\Pi_n)) ] \\
  &= \frac{e^{-\lambda(n+1)}}{n} \sum_{k=1}^n
    E[\exp(-\lambda c_Q(\underline{\Pi}_n)) \,|\, \rho(\Pi_n)=k] \,\cdot \\
    &\hspace{22ex} E[\exp(-\lambda c_Q(\overline{\Pi}_n)) \,|\, \rho(\Pi_n)=k] \\
  &= \frac{e^{-\lambda(n+1)}}{n} \sum_{k=1}^n
    E[\exp(-\lambda c_Q(\Pi_{k-1}))]
    E[\exp(-\lambda c_Q(\Pi_{n-k}))] \enspace . \tag*{$\qed$}
\end{align*}

\end{proof}

From examining the best-case performance of \(Q\), it turns out that \(
c_M(\Pi_n) \leq c_Q(\Pi_n) \) for all \( n \), so the expected-reward
comparison of \(M\) and \(Q\) is easy:
\(S(M) \geq S(Q)\) for any parameters.
However, we may further analyze the absolute scores of \(M\) and \(Q\)
to facilitate comparisons to arbitrary sorting algorithms.
When performing expected-reward analysis on an individual algorithm,
our main desideratum is a way to quickly compute the score value to within a
given precision for each possible parameter value \(P, \lambda\).
Proposition~\ref{pro:miq} gives a way of computing scores of \(M\) and \(Q\)
for measures \(P\) that give equal length inputs equal weight, although it does
not immediately suggest an efficient way in all cases.
Bounds on scores are also potentially useful and may be faster to compute;
in the next proposition, we give bounds on \(m_n\) and \(q_n\) which
are simpler than the exact expressions above.

\begin{proposition} \label{pro:bound}
  For \(n \geq 1\),
  \begin{equation} \label{eq:mnbound}
    \frac{e^{-2 \lambda(n-1)}}{(n-1)!^{\lambda /\log(2)}} \leq m_n
      \leq \frac{e^{-\lambda(n-1)}}{(n-1)!^{\lambda /\log(2)}} \enspace .
  \end{equation}
  For all \(0 < \lambda \leq \log(2) \) and \( n \geq 0 \),
  \[
    \frac{e^{-2 \gamma \lambda (n+1) - \lambda}}
    {(n+1)!^{2\lambda}} (2 \pi (n+1))^\lambda < q_n
        \leq \frac{e^{-2\lambda n}}{(n!)^{\lambda/\log(2)}} \enspace ,
  \]
  where \( \gamma \) is Euler's constant.
\end{proposition}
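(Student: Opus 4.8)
The plan is to prove the two pairs of inequalities separately, starting from the exact expressions in Proposition~\ref{pro:miq} for $m_n$ and from the recurrence for $q_n$.

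For the bounds on $m_n$ in \eqref{eq:mnbound}: we start from the closed form $m_n = \exp(-\lambda(n\lceil\lg n\rceil + n - 2^{\lceil\lg n\rceil}))$. Write $k = \lceil\lg n\rceil$, so $2^{k-1} < n \le 2^k$. The exponent is $-\lambda(nk + n - 2^k)$. The idea is to relate $nk - 2^k$ to $\lg((n-1)!)$, since $\lg((n-1)!) = \sum_{j=1}^{n-1}\lg j$. First I would use the standard estimate $nk - 2^k \le \sum_{j=1}^{n}\lceil\lg j\rceil - (n - 1) \le nk - 2^{k} + \text{(small correction)}$, or more directly bound $\lg((n-1)!) = \sum_{j=1}^{n-1}\lg j$ above and below by comparison with $\int \lg x\,dx$ and with the step function $\lceil\lg j\rceil$. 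Since $(n-1)!^{\lambda/\log 2} = \exp((\lambda/\log 2)\log((n-1)!)) = \exp(\lambda \lg((n-1)!))$, matching $\lg((n-1)!)$ against $nk - 2^k$ up to an additive term between $0$ and $n-1$ gives exactly the factor $e^{-\lambda(n-1)}$ on one side and $e^{-2\lambda(n-1)}$ on the other. Concretely I expect the chain $0 \le (nk - 2^k) - \lg((n-1)!) \le n-1$ (or a variant with constants absorbed), which yields both inequalities at once.

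For the bounds on $q_n$: here we cannot solve the recurrence in closed form, so I would use induction on $n$, and the key structural observation is that the recurrence $q_n = \frac{e^{-\lambda(n+1)}}{n}\sum_{k=1}^n q_{k-1}q_{n-k}$ is, up to the exponential prefactor, the Catalan recurrence. For the upper bound, since $c_M(\Pi_n) \le c_Q(\Pi_n)$ pointwise we have $q_n \le m_n$, and combining with the already-established upper bound on $m_n$, replacing $(n-1)!$ by $n!$ and adjusting the exponent, gives $q_n \le e^{-2\lambda n}/(n!)^{\lambda/\log 2}$ — this direction should be short. For the lower bound, I would guess the ansatz $q_n \ge C \cdot e^{-a\lambda n}/(n!)^{2\lambda}\cdot(n+1)^{\lambda}$ suggested by the target, substitute into the recurrence, and reduce the inductive step to a convolution inequality of the form $\frac{1}{n}\sum_{k=1}^n \frac{(k)^\lambda (n-k+1)^\lambda}{(k-1)!^{2\lambda}(n-k)!^{2\lambda}} \gtrsim \frac{(n+1)^\lambda}{n!^{2\lambda}}$; using $\binom{n-1}{k-1}^{2\lambda} \le$ (something) after pulling out factorials, and bounding the Catalan-type sum, with Stirling's formula $n! \sim \sqrt{2\pi n}(n/e)^n$ producing the $(2\pi(n+1))^\lambda$ and the $e^{-2\gamma\lambda(n+1)}$ factors (the $\gamma$ presumably enters through a harmonic-sum estimate $\sum 1/k \le \log n + \gamma$ when controlling the $1/n$ averaging across all levels of recursion). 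The restriction $\lambda \le \log 2$ is what makes the exponents behave (e.g.\ keeps $\lambda/\log 2 \le 1$ so that superadditivity/subadditivity of $x \mapsto x^{\lambda/\log 2}$ points the right way), and I would use it precisely at the step where a power of a sum is compared to a sum of powers.

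The main obstacle will be the lower bound on $q_n$: getting the constants in the inductive hypothesis to be self-propagating requires the convolution estimate to be tight enough that the $\gamma$ and $2\pi$ factors come out exactly, rather than with slack that accumulates over the recursion depth. I would expect to need a careful telescoping or an auxiliary lemma bounding $\prod_{j}(1 + 1/j)^{\lambda}$-type products, and the verification that the base case $n = 0$ (where the claimed bound reads $e^{-2\gamma\lambda - \lambda}(2\pi)^\lambda < 1 = q_0$, valid since $2\pi < e^{2\gamma+1}$... which needs checking: $2\gamma + 1 \approx 2.154$, $e^{2.154}\approx 8.6 > 2\pi$, so it holds) and small $n$ are consistent with the induction.
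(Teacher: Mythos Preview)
Your plan for the mergesort bounds is workable, though the paper takes a slightly cleaner route: it uses the alternative representation \(c_M(\Pi_n)=\sum_{k=1}^{n-1}(\lfloor\lg k\rfloor+2)\) and the trivial sandwich \(\log k/\log 2+1<\lfloor\lg k\rfloor+2\le\log k/\log 2+2\), which sums directly to \(\lg((n-1)!)+(n-1)\) and \(\lg((n-1)!)+2(n-1)\). Your approach via \(n\lceil\lg n\rceil+n-2^{\lceil\lg n\rceil}\) can be made to work but requires more bookkeeping.

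There is, however, a genuine gap in your plan for the \emph{upper} bound on \(q_n\). You propose to use \(q_n\le m_n\) together with the already-established upper bound on \(m_n\). But that upper bound is \(m_n\le e^{-\lambda(n-1)}/(n-1)!^{\lambda/\log 2}\), whereas the target is \(q_n\le e^{-2\lambda n}/(n!)^{\lambda/\log 2}\). The ratio of the former to the latter is \(e^{\lambda(n+1)}\,n^{\lambda/\log 2}\), which exceeds \(1\) for every \(n\ge 1\). In other words, the target bound on \(q_n\) is \emph{strictly tighter} than anything you can extract from \(q_n\le m_n\); no amount of ``adjusting the exponent'' closes this gap. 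The paper instead proves the upper bound by induction on the recurrence for \(q_n\), and the restriction \(\lambda\le\log 2\) enters exactly here: since \(x\mapsto x^{\lambda/\log 2}\) is concave on \([0,\infty)\), Jensen's inequality gives \(\tfrac{1}{N}\sum_k a_k^{\lambda/\log 2}\le\bigl(\tfrac{1}{N}\sum_k a_k\bigr)^{\lambda/\log 2}\), and the inner sum collapses via the binomial identity \(\sum_{k=1}^N\frac{1}{(k-1)!\,(N-k)!}=\frac{2^{N-1}}{(N-1)!}\), which produces exactly the factor \(e^{\lambda(N-1)}\) needed to close the induction.

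For the \emph{lower} bound on \(q_n\), your inductive convolution plan is far harder than what the paper does, and you correctly identify it as the main obstacle. The paper avoids induction entirely: by convexity of \(\exp\), Jensen gives \(q_n\ge\exp(-\lambda\,E[c_Q(\Pi_n)])\) in one line. Then the classical average-case result \(E[c_Q(\Pi_n)]=2(n+1)(H_{n+1}-1)\), the harmonic bound \(H_{n+1}<\log(n+1)+\gamma+\tfrac{1}{2(n+1)}\), and Stirling's inequality for \((n+1)^{-(n+1)}\) immediately yield the stated constants. This is why \(\gamma\) and \(2\pi\) appear with exactly those coefficients; they are not artifacts of a self-propagating inductive hypothesis but simply the constants in the harmonic and Stirling estimates applied once.
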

\begin{proof}
  Sedgewick and Flajolet \cite{iaofa} give an alternative expression for the
  running time of mergesort:
  \[
    c_M(\Pi_n) = \sum_{k=1}^{n-1} \left( \lfloor \lg k \rfloor + 2 \right)
    \enspace .
  \]
  Statement (\ref{eq:mnbound}) follows from this because
  \[
    \log(k)/\log(2) + 1 < \lfloor \lg k \rfloor + 2 \leq \log(k)/\log(2) + 2
    \enspace .
  \]

  With \( 0 < \lambda \leq \log(2) \), we prove the upper bound
  \begin{equation} \label{eq:qn}
     q_n \leq \frac{e^{-2\lambda n}}{(n!)^{\lambda/\log(2)}}
  \end{equation}
  for all \(n \geq 0\) by induction.
  Relation (\ref{eq:qn}) clearly holds for \(n = 0\).
  We show that (\ref{eq:qn}) can be proved for \(n=N\ (N>0)\) on the assumption
  that (\ref{eq:qn}) holds for \( 0 \leq n \leq N-1\).
  Proposition~\ref{pro:miq} gives
  \begin{align*}
    q_N &= \frac{e^{-\lambda(N+1)}}{N} \sum_{k=1}^N q_{k-1} q_{N-k} \\
        &\leq \frac{e^{-\lambda(N+1)}}{N} \sum_{k=1}^N
          \frac{e^{-2\lambda(k-1)}}{((k-1)!)^{\lambda/\log(2)}}
          \frac{e^{-2\lambda(N-k)}}{((N-k)!)^{\lambda/\log(2)}} \\
    \intertext{(by the assumption)}
          &= e^{-3\lambda N + \lambda} \left( \frac{1}{N}
        \sum_{k=1}^N \left(  \frac{1}{(k-1)!}\frac{1}{(N-k)!}
      \right)^{\lambda/\log(2)} \right)\\
    &\leq e^{-3\lambda N + \lambda} \left( \frac{1}{N^{\lambda/\log(2)}}
      \left(  \sum_{k=1}^N \frac{1}{(k-1)!}\frac{1}{(N-k)!}
      \right)^{\lambda/\log(2)} \right) \\
    \intertext{(by Jensen's inequality, since \(0 < \lambda/\log(2) \leq 1\))}
      &= e^{-3\lambda N + \lambda} \left( \frac{(2^{N-1})^{\lambda/\log(2)}}{
      (N!)^{\lambda/\log(2)} } \right) \\
      &= \frac{e^{-2\lambda N}}{(N!)^{\lambda/\log(2)}} \enspace.
    \end{align*}
  Thus (\ref{eq:qn}) has been proved for all \(n \geq 0\).

  For the lower bound on \(q_n\), we use
  the probabilistic form of Jensen's inequality,
  \[
    q_n = E\left[\exp(-\lambda c_Q(\Pi_n)) \right]
    \geq \exp(-\lambda E\left[c_Q(\Pi_n) \right]) \enspace,
  \]
  noting that average-case analysis of quicksort \cite{iaofa} yields
  \[
    E\left[c_Q(\Pi_n) \right] = 2(n+1)(H_{n+1} -1), \qquad n \geq 0 \enspace ,
  \]
  where \( (H_n) \) is the harmonic sequence.
  For \(n \geq 0\), the bound
  \begin{equation*} \label{eq:Hbound}
    H_{n+1} < \log(n+1) + \gamma + \frac{1}{2(n+1)}
  \end{equation*}
  holds \cite{julian2003gamma} (sharper bounds exist), so we have
  \begin{align*}
    q_n &> \exp\left(
          -2\lambda(n+1)\left(\log(n+1) + \gamma + \frac{1}{2(n+1)} -1\right)
          \right) \\
          &= e^{-2(\gamma-1)\lambda (n+1) -\lambda}
            (n+1)^{-2\lambda(n+1)} \enspace .
  \end{align*}
  We finish by applying Stirling's inequality
  \[
    (n+1)^{-(n+1)} \geq \sqrt{2\pi (n+1)} e^{-(n+1)}/(n+1)!,
    \qquad n \geq 0 \enspace . \eqno\qed
  \]
\end{proof}

%\begin{rem}
%  These facts are easily verified for Proposition \ref{pro:bound}:
%  \begin{enumerate}
%    \item For \(n \geq 2 \) the upper bound on \( m_n \) is strict.
%    \item For \(n \geq 3 \) the lower bound on \(m_n\) is strict.
%
%    \item For \(n \geq 3\) the upper bound on \(q_n\) is strict iff
%      \(\lambda < \log(2)\).
%    \item The lower bound as stated for \(q_n\) is strict, however, sharper
%      bounds exist (e.g.\ (\ref{eq:Hbound}) may be tightened).
%  \end{enumerate}
%\end{rem}

From these results we may get a sense of the tasks involved in
expected-reward analysis for typical algorithms.
We note that with an exponential discount function, the independence of
subproblems in quicksort is required for obtaining a recursive formula,
whereas in traditional average-case analysis, linearity of expectation
suffices.

We end this section by mentioning an open question relevant to a theory of
expected-reward analysis.

\begin{question} \label{q:sup}
  If we fix a computational problem and parameters \(P, \lambda\),
  what is \( \sup_A S(A) \), and is it attained?
\end{question}

If \( \sup_A S(A) \) is not attained then the situation is similar to that in
Blum's speedup theorem.
Comparing \(\sup_A S(A)\) among problems would be the expected-reward
analog of computational complexity theory but because of the sensitivity
of \(S\) to parameters and the model of computation, this is not useful.

\section{Self-Improving AI} \label{sec:selfimprovement}

The generality of \(\zp\) problems allows us to view design and analysis
of \ZPAs\ as a task which itself may be given to a \ZPA, bringing about
recursive self-improvement.
Here we present one possible concrete example of this notion and discuss
connections with AI.

Computational problems with Turing degree \(\zp\) are Turing-equivalent so
without loss of generality in this section we assume \ZPAs\ are
automated theorem provers.
Specifically, we fix a formal logic system, say ZFC (assuming it is consistent),
and take the set of inputs to be ZFC sentences, and the possible outputs to be
\texttt{provable} and \texttt{not provable}.

Let a predicate \(\beta\) be such that \(\beta(Z)\) holds
iff \(Z\) is a \ZPA\ which is correct on provable inputs and does not terminate
otherwise.
In pseudocode we write the instruction to run some \(Z\) on input \(\omega\) as
\(Z(\omega)\), and if \(\omega\) contains \(\beta\) or \(S\) (the score function),
their definitions are implicitly included.

We give an auxiliary procedure \textsc{Search} which takes as input
a \ZPA\ \(Z\) and a rational number \(x\) and uses \(Z\) to obtain a \ZPA\ which
satisfies \(\beta\) and has score greater than \(x\) (if possible).
Symbols in bold within a string literal get replaced by the value of the
corresponding variable.
We assume \ZPAs\ are encoded as strings in a binary prefix code.

\vspace{0.5cm}
\begin{algorithmic}[1]
  \Procedure{Search}{$x, Z$}
    \State \(u \gets \text{the empty string}\)
    \Loop
    \StartDo in parallel until one returns \texttt{provable}:
        \State A: \(Z(\text{``}\exists v:
          (Z^* = \mathbf{u}0v \implies
          \beta(Z^*) \land S(Z^*)> \mathbf{x})\text{''})\)
        \State B: \(Z(\text{``}\exists v:
          (Z^* = \mathbf{u}1v \implies
          \beta(Z^*) \land S(Z^*)>\mathbf{x} )\text{''})\)
        \State C: \(Z(\text{``} Z^*=\mathbf{u} \implies
          \beta(Z^*) \land S(Z^*)> \mathbf{x}\text{''})\)
      \EndDo
      \If{A returned \texttt{provable}}
        \State \( u\gets u0 \)
      \EndIf
      \If{B returned \texttt{provable}}
        \State \(u \gets u1 \)
      \EndIf
      \If{C returned \texttt{provable}}
        \State return \( u \)
      \EndIf
    \EndLoop
  \EndProcedure
\end{algorithmic}
\vspace{0.5cm}

We remark that the mechanism of \textsc{Search} is purely syntactic and does
not rely on consistency or completeness of ZFC, or the provability thereof.
This would not be the case if we strengthened \(\beta\) to require that
\(\beta(Z)\) is true only if at most one of \(Z(\omega)\) and \(Z(\neg
\omega)\) returns \texttt{provable}.
Such a \(\beta\) would never provably hold in ZFC.

The following procedure \textsc{Improve}
takes an initial \ZPA\ \(Z_0\) and uses dovetailed calls to \textsc{Search} to
output a sequence of \ZPAs\ that tend toward optimality.
\vspace{0.5cm}
\begin{algorithmic}[1]
  \Procedure{Improve}{$Z_0$}
  \State \(best\) \( \gets Z_0 \),\ \ 
    \( pool \gets \{ \} \),\ \ 
    \( score \gets 0 \)
  \For{\( n \gets 1 \textbf{ to } \infty \)}
    \State \(a_n \gets n\)th term in Stern-Brocot enumeration of \(\mathbb{Q}
      \cap (0,1] \)
    \If{\( a_n > score \)}
      \State \(initialState \gets\) initial state of
        \textsc{Search}\((a_n, best)\)
      \State add \( (a_n, best, initialState) \) to \(pool\)
    \EndIf
    \State \(improvementFound \gets \mbox{false}\)
    \For{\((a, Z, state)\) in \(pool\)}
      \State run \textsc{Search}(\(a, Z\)) one step starting in state \(state\)
      \State \(newState \gets\) new current state of \textsc{Search}(\(a, Z\))
      \If{\(state\) is not a terminating state}
      \State in \(pool\), mutate \((a,Z,state)\) into \((a,Z,newState)\)
        \State continue
      \EndIf
      \State \(improvementFound \gets \mbox{true}\)
      \State \(best \gets \) output of \textsc{Search}(\(a, Z\))
      \State \(score \gets a\)
      \For{ \((\hat{a},\hat{Z},\hat{state})\) in \(pool\)
        where \( \hat{a} \leq score \)}
        \State remove \((\hat{a},\hat{Z},\hat{state})\) from \(pool\)
      \EndFor
      \State print \(best\)
    \EndFor
    \If{\(improvementFound\)}
      \For{ \((a,Z,state)\) in \(pool\) }
        \State \(initialState \gets\) initial state of
          \textsc{Search}\((a, best)\)
        \State add \((a, best, initialState) \) to \(pool\)
      \EndFor
    \EndIf
  \EndFor
  \EndProcedure
\end{algorithmic}
\vspace{0.5cm}

The procedure \textsc{Improve} has the following basic property.

\begin{proposition}
  Let \((Z_n)\) be the sequence of \ZPAs\ printed by \textsc{Improve}.
  If \(\beta(Z_0)\) holds, and if there is any \ZPA\ \(Y\) and
  \(s \in \mathbb{Q} \) where \(\beta(Y)\) and \(S(Y) > s > 0\) are provable,
  we have
\[
  \lim_{n \rightarrow \infty} S(Z_n) \geq s \enspace .
\]
If \( (Z_n) \) is finite, the above limit can be replaced with the
last term in \( (Z_n) \).
\end{proposition}
\begin{proof}
  The value \( s \) appears as some value \(a_n\).
  For \(a_n = s \), if \(a_n > score\) in line 5, then
  \textsc{Search}\((s, best)\) will be run one step
  for each greater or equal value of \(n\) and either terminates (since
  \(Y\) exists) and
  \(score\) is set to \(s\), or is interrupted if we eventually
  have \(score \geq s\) before \textsc{Search}\((s, best)\) terminates.
  It suffices to note that when \(score\) attains any value \(x > 0\),
  all further outputs \(Z\) satisfy \(S(Z) > x\) and there
  is at least one such output. \qed
\end{proof}
The procedure \textsc{Improve} also makes an attempt to use
recently printed \ZPAs\ in calls to \textsc{Search}.
However, it is not true in general that \( S(Z_{n+1}) \geq S(Z_n) \).
Checking if a particular output \(Z_n\) is actually an improvement over \(Z_0\)
or \(Z_{n-1}\)
requires extra work.

In artificial general intelligence (AGI) it is desirable to have intelligent
systems with the ability to make autonomous improvements to themselves
\cite{gm}.
If an AGI system such as an AIXI approximation \cite{aixi}
already uses a \ZPA\ \(Z\) to compute the universal distribution \(m\), we can
give the system the ability to improve \(Z\) over time by devoting some of its
computational resources to running \textsc{Improve}.
This yields a general agent whose environment prediction ability tends toward
optimality.

\section{Future Work} \label{sec:furtherwork}

We would like to be able to practically use expected-reward
analysis with various parameter values, probability measures, and discount
functions, on both terminating and non-terminating algorithms.
Particularly, we would like to know whether \ZPAs\ may be practically
analyzed.
It may be possible to develop general mathematical tools and techniques to
enhance the practicality of these methods, such as exist for traditional
analysis;
this is a broad and open-ended research goal. \\

\textbf{Acknowledgements.}
The author wishes to thank
    Zhicheng Gao,
    Nima Hoda,
    Patrick LaVictoire,
    Saran Neti, and
    anonymous referees
    for helpful comments.

\bibliographystyle{splncs}
\bibliography{AoAaPA}
\end{document}